\documentclass[runningheads]{llncs}


\newcommand {\ent} {\mathrel{{\scriptstyle\mid\!\sim}}}

\newcommand {\sx} {\langle}
\newcommand {\dx} {\rangle}

\newcommand {\enne} {\mathcal{N}}

\newcommand {\tc} {\mid}
\newcommand {\vuoto} {\emptyset}

\newcommand{\tip}{{\bf T}}

\newcommand{\alc}{\mathcal{ALC}}

\newcommand{\alct}{\mathcal{ALC}+\tip}

\newcommand{\sroiqpt}{\mathcal{SROIQ}^{\Pe}\tip}
\newcommand{\alcFt}{\mathcal{ALC}^{\Fe}\tip}

\newcommand{\be}{\begin{enumerate}}
\newcommand{\ee}{\end{enumerate}}

\newcommand{\hide}[1]{}




%
%
%
%
%
%



\def \cases{\left \{\begin{array}{l}}
\def \endcases{\end{array}\right .}

\newcommand {\Fe} {{\bf F}}

\newcommand {\Pe} {{\bf P}}

\newcommand {\ri} {\rightarrow}
\newcommand {\Ri} {\Rightarrow}

\newcommand {\bes} {\begin{description}}
\newcommand{\ens} {\end{description}}

\newcommand {\beq} {\begin{quote}}
\newcommand {\enq} {\end{quote}}
\newcommand {\bit} {\begin{itemize}}
\newcommand {\enit} {\end{itemize}}


\newenvironment{pozz}{\color{black}}{\color{black}}
\usepackage{times}
\usepackage{helvet}
\usepackage{courier}
\setlength{\pdfpagewidth}{8.5in}
\setlength{\pdfpageheight}{11in}

\usepackage{times}
\usepackage{undertilde}
\usepackage{graphicx}
\usepackage{latexsym}
\usepackage{graphicx}
\usepackage{color}
\usepackage{amssymb}
\usepackage{colortbl}
\usepackage{amsmath}
\usepackage{microtype}
\usepackage{amsmath}
\usepackage{tikz}

\def \ri{\rightarrow}
\def \Ri{\Rightarrow}






\begin{document}
\bibliographystyle{splncs04}

\title{On the KLM properties of a fuzzy DL with Typicality}



\author{Laura Giordano} 

\institute{DISIT - Universit\`a del Piemonte Orientale, 
 Alessandria, Italy  \\
  \email{laura.giordano@uniupo.it} 
}

\authorrunning{ }
\titlerunning{ }

 \maketitle
 

\begin{abstract}

The paper investigates the properties of a fuzzy logic of typicality. The extension of fuzzy logic with a typicality operator was proposed
in recent work to define a fuzzy multipreference semantics for Multilayer Perceptrons, 
by regarding the deep neural network as a conditional knowledge base.
In this paper, we study its properties. First, a monotonic extension of a fuzzy $\alc$ with typicality is considered (called $\alcFt$) and a reformulation the KLM properties of a preferential consequence relation for this logic is devised. Most of the properties are satisfied, depending on the reformulation and on the fuzzy combination functions considered.
We then strengthen $\alcFt$ with a closure construction by introducing a notion of {\em faithful} model of a weighted knowledge base, which generalizes the notion of {\em coherent} model 
of a conditional knowledge base previously introduced, and we study its properties.


\end{abstract}

\section{Introduction}

Preferential approaches have been used to provide axiomatic foundations of non-monoto- nic  and 
common sense reasoning 
\cite{Delgrande:87,Makinson88,Pearl:88,KrausLehmannMagidor:90,Pearl90,whatdoes,BenferhatIJCAI93,Kern-Isberner01}.
They have been extended to description logics (DLs), as well as to the first order case \cite{BeierleJAR17}, to deal with inheritance with exceptions in ontologies,
by allowing for non-strict forms of inclusions,
called {\em typicality or defeasible inclusions}, 
with different preferential semantics \cite{lpar2007,sudafricaniKR,FI09}
and closure constructions \cite{casinistraccia2010,CasiniDL2013,AIJ15,BonattiSauro17,Pensel18,CasiniStracciaM19,AIJ21}. 

In previous work \cite{JELIA2021}, a concept-wise multipreference semantics 
for weighted knowledge bases has been proposed to account for preferences with respect to different concepts, 
by allowing, for a concept $C$, a set of typicality inclusions of the form $\tip(C) \sqsubseteq D$ (meaning ``the typical $C$'s are $D$'s" or ``normally $C$'s are $D$'s") with positive or negative weights. 
The concept-wise multipreference semantics  
has been first introduced  as a semantics for ranked DL knowledge bases  \cite{iclp2020} and extended in   \cite{JELIA2021} to  weighted knowledge bases in the two-valued and fuzzy case, based on a different semantic closure construction, 
still in the spirit of Lehmann's lexicographic closure \cite{whatdoes} and Kern-Isberner's c-representations  \cite{Kern-Isberner01,Kern-Isberner2014}, but exploiting multiple preferences associated to concepts.
A related semantics with multiple preferences has been proposed in the first-order logic setting by Delgrande and Rantsaudis \cite{Delgrande2020}, and an extension of DLs with defeasible role quantifiers and defeasible role inclusions has been developed by Britz and Varzinczak \cite{Britz2018,Britz2019}, by associating multiple  preferences to roles.

The concept-wise multipreference semantics 
has been proved to have some desired properties from the knowledge representation point of view in the two-valued case \cite{iclp2020}. In particular, it satisfies the KLM postulates of a preferential consequence relation. 
The properties 
of entailment in a fuzzy DL with typicality have not been studied so far. In this paper, 
a monotonic extension of a fuzzy $\alc$ with typicality is considered (called $\alcFt$) and the KLM properties of a preferential consequence relation are reformulated for this logic. Most of the postulates are satisfied, depending on the reformulation and on the chosen fuzzy combination functions.

The closure construction developed in \cite{JELIA2021} to define the models of a weighted defeasible knowledge base in the fuzzy case is reconsidered, 
by introducing a notion of {\em faithful} model of a weighted (fuzzy) knowledge base which is weaker than the notion of {\em coherent} fuzzy multipreference model in  \cite{JELIA2021}.  This allows us to capture the larger class of monotone non-decreasing activation functions in Multilayer Perceptrons (MLPs) \cite{Haykin99},
based on the idea that, in a deep neural network,  synaptic connections can be regarded as weighted conditionals.
The paper discusses the properties of faithful multipreference entailment for weighted conditionals in a fuzzy DL.

\section{The description logic $\alc$ and fuzzy $\alc$}  \label{sec:ALC}

In this section we recall the syntax and semantics of the description logic $\alc$ \cite{handbook} and of its fuzzy extension \cite{LukasiewiczStraccia09}. 

Let ${N_C}$ be a set of concept names, ${N_R}$ a set of role names
  and ${N_I}$ a set of individual names.  
The set  of $\alc$ \emph{concepts} (or, simply, concepts) can be
defined inductively: \\ 
- $A \in N_C$, $\top$ and $\bot$ are {concepts};\\
- if $C$ and $ D$ are concepts, and $r \in N_R$, then $C \sqcap D,\; C \sqcup D,\; \neg C, \; \forall r.C,\; \exists r.C$ 
are {concepts}.

\noindent
A knowledge base (KB) $K$ is a pair $({\cal T}, {\cal A})$, where ${\cal T}$ is a TBox and
${\cal A}$ is an ABox.
The TBox ${\cal T}$ is  a set of concept inclusions (or subsumptions) $C \sqsubseteq D$, where $C,D$ are concepts.
The  ABox ${\cal A}$ is  a set of assertions of the form $C(a)$ and $r(a,b)$
where $C$ is a  concept, $a$ and $b$ are individual names in $N_I$ and $r$ a role name in $N_R$.

An  $\alc$ {\em interpretation}  is defined as a pair $I=\langle \Delta, \cdot^I \rangle$ where:
$\Delta$ is a domain---a set whose elements are denoted by $x, y, z, \dots$---and 
$\cdot^I$ is an extension function that maps each
concept name $C\in N_C$ to a set $C^I \subseteq  \Delta$, 
each role name $r \in N_R$ to  a binary relation $r^I \subseteq  \Delta \times  \Delta$,
and each individual name $a\in N_I$ to an element $a^I \in  \Delta$.
It is extended to complex concepts  as follows:

$\top^I=\Delta$  \ \ \ \ \ \ \  

$\bot^I=\vuoto$  \ \ \ \ \ \ \  

$(\neg C)^I=\Delta \backslash C^I$

$(C \sqcap D)^I=C^I \cap D^I$

$(C \sqcup D)^I=C^I \cup D^I$

$(\exists r.C)^I =\{x \in \Delta \tc \exists y.(x,y) \in r^I \ \mbox{and}  \ y \in C^I\}$   \ \ \ \ \ \ \ 

$(\forall r.C)^I =\{x \in \Delta \tc \forall y. (x,y) \in r^I \Ri y \in C^I\}$   \ \ \ \ \ \ \ \ \

\noindent
The notion of satisfiability of a KB  in an interpretation and the notion of entailment are defined as follows:

\begin{definition}[Satisfiability and entailment] \label{satisfiability}
Given an $\alc$ interpretation $I=\langle \Delta, \cdot^I \rangle$: 

	- $I$  satisfies an inclusion $C \sqsubseteq D$ if   $C^I \subseteq D^I$;
	
	-   $I$ satisfies an assertion $C(a)$ (resp., $r(a,b)$) if $a^I \in C^I$ (resp.,  $(a^I,b^I) \in r^I$).

\noindent
 Given  a KB $K=({\cal T}, {\cal A})$, 
 an interpretation $I$  satisfies ${\cal T}$ (resp. ${\cal A}$) if $I$ satisfies all  inclusions in ${\cal T}$ (resp. all assertions in ${\cal A}$);
 $I$ is a \emph{model} of $K$ if $I$ satisfies ${\cal T}$ and ${\cal A}$.

 A subsumption $F= C \sqsubseteq D$ (resp., an assertion $C(a)$, $r(a,b)$),   {is entailed by $K$}, written $K \models F$, if for all models $I=$$\sx \Delta,  \cdot^I\dx$ of $K$,
$I$ satisfies $F$.

\end{definition}
Given a knowledge base $K$,
the {\em subsumption} problem is the problem of deciding whether an inclusion $C \sqsubseteq D$ is entailed by  $K$.

Fuzzy description logics have been widely studied in the literature for representing vagueness in DLs \cite{Straccia05,Stoilos05,LukasiewiczStraccia09,PenalosaARTINT15,BobilloOWL2EL2018}, 
based on the idea that concepts and roles can be interpreted 
as fuzzy sets. 
Formulas in Mathematical Fuzzy Logic \cite{Cintula2011} have a degree of truth in an interpretation rather than being true or false; similarly,
axioms in a fuzzy DL have a degree of truth, usually in the interval  $[0, 1]$. 
In the following we shortly recall the semantics of a fuzzy extension of $\alc$ referring to the survey by Lukasiewicz and Straccia \cite{LukasiewiczStraccia09}.
We limit our consideration  to a few features of a fuzzy DL  and, in particular, we omit considering datatypes.

A {\em fuzzy interpretation} for $\alc$ is a pair $I=\langle \Delta, \cdot^I \rangle$ where:
$\Delta$ is a non-empty domain and 
$\cdot^I$ is {\em fuzzy interpretation function} that assigns to each
concept name $A\in N_C$ a function  $A^I :  \Delta \ri [0,1]$,
to each role name $r \in N_R$  a function  $r^I:   \Delta \times  \Delta \ri [0,1]$,
and to each individual name $a\in N_I$ an element $a^I \in  \Delta$.
A domain element $x \in \Delta$ 
belongs to the extension of $A$ to some degree in $[0, 1]$, i.e., $A^I$ is a fuzzy set.

The  interpretation function $\cdot^I$ is extended to complex concepts as follows: 

$\mbox{\ \ \ }$ $\top^I(x)=1$, $\mbox{\ \ \ \ \ \ \ \ \ \  \ \ \ \ \ \ \ \ \ \ }$ $\bot^I(x)=0$,  $\mbox{\ \ \ \ \ \ \ \ \ \ \ \ \ \ \  \ \ \ \ \ }$  $(\neg C)^I(x)= \ominus C^I(x)$, 

$\mbox{\ \ \ }$  $(\exists r.C)^I(x) = sup_{y \in \Delta} \; r^I(x,y) \otimes C^I(y)$,  $\mbox{\ \ \ \ \ \ }$  $(C \sqcup D)^I(x) =C^I(x) \oplus D^I(x)$ 

$\mbox{\ \ \ }$  $(\forall r.C)^I (x) = inf_{y \in \Delta} \;  r^I(x,y) \rhd C^I(y)$, $\mbox{\ \ \ \ \ \ \ }$  $(C \sqcap D)^I(x) =C^I(x) \otimes D^I(x)$ 

\noindent
where  $x \in \Delta$ and $\otimes$, $\oplus$, $\rhd$ and $\ominus$ are arbitrary but fixed t-norm, s-norm, implication function, and negation function, chosen among the combination functions of various fuzzy logics 
(we refer to \cite{LukasiewiczStraccia09} for details). For instance, in both Zadeh and G\"odel logics $a \otimes b= min\{a,b\}$,  $a \oplus b= max\{a,b\}$. In Zadeh logic $a \rhd b= max\{1-a,b\}$ and $ \ominus a = 1-a$. In G\"odel logic  $a \rhd b= 1$ {\em if} $a \leq b$ {\em and} $b$ {\em otherwise};
 $ \ominus a = 1$ {\em if} $a=0$  {\em and} $0$ {\em otherwise}.

The  interpretation function $\cdot^I$ is also extended  to non-fuzzy axioms (i.e., to strict inclusions and assertions of an $\alc$ knowledge base) as follows:\\
 $(C \sqsubseteq D)^I= inf_{x \in \Delta}  C^I(x) \rhd D^I(x)$,
$\mbox{\ \ \ }$  $(C(a))^I=C^I(a^I)$,  $\mbox{\ \ \ }$  $(R(a,b))^I=R^I(a^I,b^I)$.

A {\em fuzzy $\alc$ knowledge base} $K$ is a pair $({\cal T}, {\cal A})$ where ${\cal T}$ is a fuzzy TBox  and ${\cal A}$ a fuzzy ABox. A fuzzy TBox is a set of {\em fuzzy concept inclusions} of the form $C \sqsubseteq D \;\theta\; n$, where $C \sqsubseteq D$ is an $\alc$ concept inclusion axiom, $\theta \in \{\geq,\leq,>,<\}$ and $n \in [0,1]$. A fuzzy ABox ${\cal A}$ is a set of {\em fuzzy assertions} of the form $C(a) \theta n$ or $r(a,b) \theta n$, where $C$ is an $\alc$ concept, $r\in N_R$, $a,b \in N_I$,  $\theta \in \{{\geq,}\leq,>,<\}$ and $n \in [0,1]$.
Following Bobillo and Straccia  \cite{BobilloOWL2EL2018}, we assume that fuzzy interpretations are {\em witnessed}, i.e., the sup and inf are attained at some point of the involved domain.
The notions of satisfiability of a KB  in a fuzzy interpretation and of entailment are defined in the natural way.
\begin{definition}[Satisfiability and entailment for fuzzy KBs] \label{satisfiability}
A  fuzzy interpretation $I$ satisfies a fuzzy $\alc$ axiom $E$ (denoted $I \models E$), as follows, 
 for $\theta \in \{\geq,\leq,>,<\}$:

- $I$ satisfies a fuzzy $\alc$ inclusion axiom $C \sqsubseteq D \;\theta\; n$ if $(C \sqsubseteq D)^I \theta\; n$;

- $I$ satisfies a fuzzy $\alc$ assertion $C(a) \; \theta \; n$ if $C^I(a^I) \theta\; n$;
 
- $I$ satisfies a fuzzy $\alc$ assertion $r(a,b) \; \theta \; n$ if $r^I(a^I,b^I) \theta\; n$.

\noindent
Given  a fuzzy KB $K=({\cal T}, {\cal A})$,
 a fuzzy interpretation $I$  satisfies ${\cal T}$ (resp. ${\cal A}$) if $I$ satisfies all fuzzy  inclusions in ${\cal T}$ (resp. all fuzzy assertions in ${\cal A}$).
A fuzzy interpretation $I$ is a \emph{model} of $K$ if $I$ satisfies ${\cal T}$ and ${\cal A}$.
A fuzzy axiom $E$   {is entailed by a fuzzy knowledge base $K$}, written $K \models E$, if for all models $I=$$\sx \Delta,  \cdot^I\dx$ of $K$,
$I$ satisfies $E$.
\end{definition}

\section{Fuzzy $\alc$ with typicality: $\alcFt$} \label{sec:fuzzyalc+T}

In this section, we extend fuzzy $\alc$ with typicality concepts of the form $\tip(C)$, for $C$ a concept in fuzzy $\alc$.
The idea is similar to the extension of $\alc$ with typicality \cite{FI09}, 
but transposed to the fuzzy case. The extension allows for the definition of {\em fuzzy typicality inclusions} of the form
$\tip(C) \sqsubseteq D \;\theta \;n$,  
meaning that typical $C$-elements are $D$-elements with a degree greater than $n$. A typicality  inclusion $\tip(C) \sqsubseteq D$, as in the two-valued case, stands for a KLM conditional implication $C \ent D$ \cite{KrausLehmannMagidor:90,whatdoes}, but now it has an associated degree.

We call $\alcFt$ the extension of fuzzy $\alc$ with typicality.
As in the two-valued case,  
such as in  $\sroiqpt$, a  preferential extension of ${\cal SROIQ}$ with typicality \cite{ISMIS2015},  or in the propositional typicality logic, PTL \cite{BoothCasiniAIJ19} 
the typicality concept may be allowed to freely occur within inclusions and assertions, while the nesting of the typicality operator is not allowed.

We have to define the semantics of $\alcFt$.
Observe that, in a fuzzy $\alc$ interpretation $I= \langle \Delta, \cdot^I \rangle$, the degree of membership $C^I(x)$ of the domain elements $x$ in a concept $C$, induces a preference relation $<_C$ on $\Delta$, as follows:
\begin{equation}\label{def:induced_order}
x <_C y \mbox{ iff } C^I(x) > C^I(y)
\end{equation}
Each $<_{C}$ has the properties of preference relations in KLM-style ranked interpretations \cite{whatdoes}, that is,  $<_{C}$ is a modular and well-founded strict partial order. 
Let us recall that, $<_{C}$ is {\em well-founded} 
if there is no infinite descending chain $x_1 <_C x_0$, $x_2 <_C x_1$, $x_3 <_C x_2, \ldots $ of domain elements;
    $<_{C}$ is {\em modular} if,
for all $x,y,z \in \Delta$, $x <_{C} y$ implies ($x <_{C} z$ or $z <_{C} y$).
Well-foundedness holds for the induced preference $<_C$ defined by condition (\ref{def:induced_order}) under the assumption that  fuzzy interpretations are witnessed \cite{BobilloOWL2EL2018} (see Section \ref{sec:ALC}) or that $\Delta$ is finite. 
In the following,  we will assume $\Delta$ to be finite.

While each preference relation $<_C$ has  the properties of a preference relation in KLM  rational interpretations \cite{whatdoes} (also called ranked interpretations), here there are
multiple preferences and, therefore, fuzzy interpretations can be regarded as {\em multipreferential} interpretations, which have also been studied in the two-valued case \cite{iclp2020,Delgrande2020,AIJ21}. 

Each preference relation $<_C$ captures the relative typicality of domain elements wrt concept $C$ and may then be used to identify the {\em typical  $C$-elements}. We will regard typical $C$-elements as the domain elements $x$ that  are preferred with respect to relation $<_C$
among those such that $C^I(x) \neq 0$.

Let $C^I_{>0}$ be the crisp set containing all domain elements $x$ such that $C^I(x)>0$, that is, $C^I_{>0}= \{x \in \Delta \mid C^I(x)>0 \}$.
One can provide a (two-valued) interpretation of typicality concepts $\tip(C)$ in a fuzzy interpretation $I$, by letting:
\begin{align}\label{eq:interpr_typicality}
	(\tip(C))^I(x)  & = \left\{\begin{array}{ll}
						 1 & \mbox{ \ \ \ \  if } x \in min_{<_C} (C^I_{>0}) \\
						0 &  \mbox{ \ \ \ \  otherwise } 
					\end{array}\right.
\end{align} 
where $min_<(S)= \{u: u \in S$ and $\nexists z \in S$ s.t. $z < u \}$.  When $(\tip(C))^I(x)=1$, we say that $x$ is a typical $C$-element in $I$.

Note that, if $C^I(x)>0$ for some $x \in \Delta$,  
$min_{<_C} (C^I_{>0})$ is non-empty.
This generalizes the property that, in the crisp case, $C^I\neq \emptyset$ implies  $(\tip(C))^I\neq \emptyset$.

\begin{definition}[$\alcFt$ interpretation]
An $\alcFt$ interpretation $I= \langle \Delta, \cdot^I \rangle$ is  fuzzy $\alc$ interpretation, equipped with the valuation of typicality concepts given by condition (\ref{eq:interpr_typicality}) above.
\end{definition}

The fuzzy interpretation  $I= \langle \Delta, \cdot^I \rangle$ implicitly defines a multipreference interpretation, where any concept $C$ is associated to a preference  relation $<_C$.  This is different from 
the two-valued multipreference semantics in \cite{iclp2020}, where only a subset of distinguished concepts have an associated preference, 
and a notion of global preference $<$ is introduced to define the interpretation of the typicality concept $\tip(C)$, for an arbitrary $C$. Here, we do not need to introduce a notion of global preference. The interpretation of any $\alc$ concept $C$ is defined compositionally from the interpretation of atomic concepts, and the preference relation $<_C$ associated to $C$ is defined from $C^I$.

The notions of {\em satisfiability} in $\alcFt$,   {\em model} of an $\alcFt$ knowledge base, and   $\alcFt$ {\em entailment} can be defined in a similar way as in fuzzy $\alc$ (see Section  \ref{sec:ALC}). 
In particular, given an $\alcFt$ knowledge base $K$, 
an inclusion $\tip(C) \sqsubseteq D \; \theta n$ (with $\theta \in \{\geq, \leq, >, <\}$ and $n \in [0,1]$) is {\em entailed 
from $K$} in $\alcFt$ (written $K \models_{\alcFt} \tip(C) \sqsubseteq D$) 
if $\tip(C) \sqsubseteq D \; \theta n$ is satisfied in all $\alcFt$ models  
$I$ of $K$.
For instance, the fuzzy inclusion axiom $\langle \tip(C) \sqsubseteq D \geq n \rangle$  is satisfied in a fuzzy interpretation $I= \langle \Delta, \cdot^I \rangle$ if  $inf_{x \in \Delta}  (\tip(C))^I(x) \rhd D^I(x) \geq n$ holds,  
which can be evaluated based on the combination functions of some specific fuzzy logic.

\section{KLM properties of $\alcFt$} 

In this section we aim at investigating the properties of typicality in $\alcFt$ and, in particular, verifying whether  the KLM postulates of a preferential consequence relation \cite{KrausLehmannMagidor:90,whatdoes} are satisfied in $\alcFt$.
The satisfiability of KLM postulates of rational or preferential consequence relations \cite{KrausLehmannMagidor:90,whatdoes} has been studied for $\alc$ with defeasible inclusions and typicality inclusions in the two-valued case \cite{sudafricaniKR,FI09}. The KLM postulates of a preferential consequence relation (i.e., reflexivity, left logical equivalence, right weakening, and, or, cautious monotonicity)
can be reformulated for $\alc$ with typicality, by considering that a typicality inclusion $\tip(C) \sqsubseteq D$ stands for a conditional $C {\ent} D$ in KLM preferential logics, as follows:

\begin{quote}

(REFL) \ $\tip(C) \sqsubseteq C $

(LLE) \ If $\models A \equiv B$ and $\tip(A) \sqsubseteq C $, then $\tip(B) \sqsubseteq C $ 

(RW) \  If $\models C \sqsubseteq D$ and $\tip(A) \sqsubseteq C $, then $\tip(A) \sqsubseteq D $ 

(AND) \ If $\tip(A) \sqsubseteq C $ and $\tip(A) \sqsubseteq D $, then $\tip(A) \sqsubseteq C \sqcap D $ 

(OR) \ If $\tip(A) \sqsubseteq C $ and $\tip(B) \sqsubseteq C $, then $\tip(A \sqcup B) \sqsubseteq C $

(CM) \  If $\tip(A) \sqsubseteq D$ and $\tip(A) \sqsubseteq C $, then $\tip(A \sqcap D) \sqsubseteq C $

\end{quote}
For $\alc$, $\models A \equiv B$ is interpreted as equivalence of concepts $A$ and $B$ in the underlying description logic $\alc$ (i.e., $A^I = B^I$ in all $\alc$ interpretations $I$), while $\models C \sqsubseteq D$
is interpreted as validity of the inclusion $C \sqsubseteq D$ in $\alc$ (i.e.,  $A^I \subseteq B^I$ for all $\alc$ interpretations $I$).

How can these postulates be reformulated in the fuzzy case? First, we can interpret $\models C \sqsubseteq D$ as the requirement that the fuzzy inclusion $C \sqsubseteq D \geq 1$ is valid in fuzzy $\alc$ (that is, $C \sqsubseteq D \geq 1$ is satisfied in all fuzzy $\alc$ interpretations),
and $\models A\equiv B$ as the requirement that the fuzzy inclusions $A \sqsubseteq B \geq 1$ and $B \sqsubseteq A \geq 1$ are valid in fuzzy $\alc$.
For the typicality inclusions, we have some options. We might interpret an inclusion $\tip(A) \sqsubseteq C $ as the fuzzy inclusion $\tip(A) \sqsubseteq C \geq 1$, or as the fuzzy inclusion $\tip(A) \sqsubseteq C > 0$.

The fuzzy inclusion axiom $\tip(A) \sqsubseteq C \geq 1$, is rather strong, as it requires that all typical $A$-elements belong to $C$ with membership degree $1$. On the other hand, $\tip(A) \sqsubseteq C > 0$ is a weak condition, as it requires that all typical $A$-elements belong to $C$ with some membership degree greater that $0$. We will see that both options fail to satisfy one of the postulates.
With the first option, the postulates can be reformulated as follows:

\begin{quote}
$(REFL')$ \ $\tip(C) \sqsubseteq C \geq 1$ 

$(LLE')$ \ If 
$\models A \equiv B$  and $\tip(A) \sqsubseteq C \geq 1$, then $\tip(B) \sqsubseteq C  \geq 1$ 

$(RW')$ \  If $\models C \sqsubseteq D$ and $\tip(A) \sqsubseteq C \geq 1$, then $\tip(A) \sqsubseteq D \geq 1$ 

$(AND')$ \ If $\tip(A) \sqsubseteq C \geq 1 $ and $\tip(A) \sqsubseteq D \geq 1$, then $\tip(A) \sqsubseteq C \sqcap D \geq 1$

$(OR')$ \ If $\tip(A) \sqsubseteq C \geq 1$ and $\tip(B) \sqsubseteq C \geq 1$, then $\tip(A \sqcup B) \sqsubseteq C \geq 1$

$(CM')$ \  If $\tip(A) \sqsubseteq D \geq 1$ and $\tip(A) \sqsubseteq C \geq 1$, then $\tip(A \sqcap D) \sqsubseteq C \geq 1$ 
\end{quote}
We can prove that, in the well-known Zadeh logic and G\"odel logic, the postulates above, with the exception of reflexivity $(REFL')$, are satisfied in all $\alcFt$ interpretations.

\begin{proposition} \label{prop:KLM_properties}
In Zadeh logic and in G\"odel logic any $\alcFt$ interpretation $I=\langle \Delta, \cdot^I \rangle $
satisfies postulates $(LLE'), (RW'),$ $ (AND'), (OR')$ and $(CM')$.

\end{proposition}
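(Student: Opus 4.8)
The plan is to reduce every postulate to an elementary statement about where the fuzzy membership functions attain their maximum, exploiting that in \emph{both} Zadeh and G\"odel logic the relevant combination functions are $\otimes = \min$ and $\oplus = \max$, and that the implication satisfies $0 \rhd b = 1$ and $1 \rhd b = b$ for every $b \in [0,1]$ (in Zadeh $\max\{1,b\}=1$ and $\max\{0,b\}=b$; in G\"odel $0\le b$ gives $1$, and $1\rhd b = b$ since $b\le 1$). These facts hold uniformly, so a single argument covers the two logics.

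First I would record the key reduction. Since $(\tip(A))^I(x) \in \{0,1\}$ by~(\ref{eq:interpr_typicality}), the behaviour of $\rhd$ above gives, for any concept $C$,
\[
(\tip(A) \sqsubseteq C)^I = \inf_{x \in \Delta} \big((\tip(A))^I(x) \rhd C^I(x)\big) = \inf\{\, C^I(x) : x \in min_{<_A}(A^I_{>0}) \,\},
\]
with the convention that the infimum over the empty set is $1$. Hence $I$ satisfies $\tip(A) \sqsubseteq C \geq 1$ \emph{iff} $C^I(x)=1$ for every typical $A$-element $x$. I would also note that, by the definition of $<_C$ in~(\ref{def:induced_order}) and the finiteness of $\Delta$, the set $min_{<_C}(C^I_{>0})$ is precisely the set of domain elements at which $C^I$ attains its maximum value, provided that maximum is strictly positive (and is empty otherwise). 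This ``typical $=$ argmax'' characterisation is the engine of all five cases.

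With these two observations the three easy postulates are immediate. For $(LLE')$, validity of $A \sqsubseteq B \geq 1$ and $B \sqsubseteq A \geq 1$ means these axioms hold in $I$, and in $I$ the equality $(C\sqsubseteq D)^I = 1$ forces $C^I(x) \le D^I(x)$ pointwise in both logics; applied both ways this gives $A^I = B^I$, so $<_A$ and $<_B$ coincide, the two sets of typical elements are equal, and the conclusion follows. For $(RW')$, $\models C \sqsubseteq D$ similarly yields $C^I(x) \le D^I(x)$ for all $x$, so at a typical $A$-element where $C^I(x)=1$ we get $D^I(x)=1$. For $(AND')$, at a typical $A$-element both $C^I(x)=1$ and $D^I(x)=1$, whence $(C \sqcap D)^I(x) = C^I(x) \otimes D^I(x) = \min\{1,1\} = 1$.

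The remaining two cases carry the real content. For $(OR')$ I would prove the lemma $min_{<_{A \sqcup B}}((A\sqcup B)^I_{>0}) \subseteq min_{<_A}(A^I_{>0}) \cup min_{<_B}(B^I_{>0})$: since $(A \sqcup B)^I = \max\{A^I, B^I\}$, a maximiser of $(A\sqcup B)^I$ realises at least one of $A^I,B^I$ at the common maximal value and is therefore typical for $A$ or for $B$, so the two hypotheses give $C^I(x)=1$. For $(CM')$ I would prove $min_{<_{A\sqcap D}}((A\sqcap D)^I_{>0}) \subseteq min_{<_A}(A^I_{>0})$, and this is the step that genuinely uses the extra premise of cautious monotonicity and is where I expect the main difficulty. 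The premise $\tip(A) \sqsubseteq D \geq 1$ forces $D^I(x)=1$ at every typical $A$-element, so $(A \sqcap D)^I = \min\{A^I, D^I\}$ agrees with $A^I$ there and the maximum of $A \sqcap D$ equals the maximum $m_A$ of $A^I$; conversely any maximiser $x$ of $A\sqcap D$ satisfies $\min\{A^I(x),D^I(x)\} = m_A$, forcing $A^I(x)=m_A$ and hence $x$ typical for $A$, so $C^I(x)=1$ by the other hypothesis. The delicate bookkeeping throughout is the degenerate case in which the relevant maximum is $0$ (the set of typical elements is then empty and the inclusion holds vacuously), which I would dispatch separately at the start of the $\sqcup$ and $\sqcap$ arguments.
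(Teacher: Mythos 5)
Your proposal is correct, and it takes a genuinely different, more structural route than the paper. The paper proves only $(LLE')$ explicitly and declares the remaining postulates ``similar'': it works directly with the implication function of each logic, handling Zadeh and G\"odel by separate case analyses (in G\"odel, $A^I(x)\rhd B^I(x)\geq 1$ in both directions forces $A^I=B^I$; in Zadeh it forces $A^I(x)=B^I(x)\in\{0,1\}$), and then transfers $(\tip(A))^I=(\tip(B))^I$. You instead front-load a uniform reduction lemma: since $(\tip(A))^I(x)\in\{0,1\}$ and in both logics $0\rhd b=1$ and $1\rhd b=b$, the axiom $\tip(A)\sqsubseteq C\geq 1$ holds in $I$ iff $C^I(x)=1$ on $min_{<_A}(A^I_{>0})$, combined with the observation that for finite $\Delta$ this set is exactly the argmax of $A^I$ when the maximum is positive. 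This buys three things the paper's sketch does not deliver: a single logic-independent argument covering Zadeh and G\"odel simultaneously (using only $\otimes=\min$, $\oplus=\max$ and the two implication identities); actual proofs of $(OR')$ and $(CM')$, the only two cases with real content, via the inclusions $min_{<_{A\sqcup B}}((A\sqcup B)^I_{>0})\subseteq min_{<_A}(A^I_{>0})\cup min_{<_B}(B^I_{>0})$ and, under the premise $\tip(A)\sqsubseteq D\geq 1$, $min_{<_{A\sqcap D}}((A\sqcap D)^I_{>0})\subseteq min_{<_A}(A^I_{>0})$ (both verified correctly, including the $m_A$ bookkeeping and the vacuous case where the typical set is empty); and a clean localization of where the hypotheses on the fuzzy logic enter---your $(OR')$ argument is the only one that uses $\oplus=\max$, which is consistent with, and explains, the paper's remark that Lukasiewicz and product logics (where $0\rhd b=1$ and $1\rhd b=b$ still hold but $\oplus\neq\max$) satisfy all the postulates except $(REFL')$ and $(OR')$. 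The paper's per-logic style is marginally more informative on $(LLE')$ in Zadeh logic, where it extracts the crisp-valued consequence $A^I(x)=B^I(x)\in\{0,1\}$ rather than your (sufficient) pointwise $A^I\leq B^I\leq A^I$, but as a proof of the stated proposition your version is the more economical and the more complete one.
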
 
\begin{proof}
Let  $I= \langle \Delta, \cdot^I \rangle$ be an $\alcFt$ interpretation 
in Zadeh logic, or in G\"odel logic.
Let us prove, as an example, that $I$ satisfies postulate $(LLE')$.  

\noindent
Assume that axioms $A \sqsubseteq B \geq 1$, $B \sqsubseteq A \geq 1$ are valid in fuzzy $\alc$ and that $\tip(A) \sqsubseteq C \geq 1 $ is satisfied in $I$.  
We prove that $\tip(B) \sqsubseteq C \geq 1$ is satisfied in $I$, that is $(\tip(B) \sqsubseteq C)^I \geq 1$.

From the validity of $A \sqsubseteq B \geq 1$ and $B \sqsubseteq A \geq 1$,  
$inf_{x \in \Delta} A^I(x) \rhd B^I(x) \geq 1$ and $inf_{x \in \Delta} B^I(x) \rhd A^I(x) \geq 1$ in both Zadeh logic and in G\"odel logic.
Hence,
\begin{equation}\label{eq:LLC}
\mbox{for all } x \in \Delta, \; A^I(x) \rhd B^I(x) \geq 1 \mbox{ and } B^I(x) \rhd A^I(x) \geq 1
\end{equation}
In G\"odel logic, this implies that: for all $x \in \Delta$, $A^I(x) \leq B^I(x)$ and $B^I(x) \leq A^I(x)$, i.e., $A^I(x) = B^I(x)$. 
Therefore, the preference relations $<_A$ and $<_B$ must be the same and $A^I_{>0}= B^I_{>0}$.
Hence, $\tip(A)^I(x) = \tip(B)^I(x)$ for all $x \in \Delta$, and from $(\tip(A) \sqsubseteq C)^I \geq 1 $, it follows that $(\tip(B) \sqsubseteq C)^I \geq 1 $, that is,  $\tip(B) \sqsubseteq C \geq 1$ is satisfied in $I$.

In Zadeh logic,  (\ref{eq:LLC}) implies that, for all $x \in \Delta$, $max\{1-A^I(x), B^I(x)\}\geq 1$ and $max\{1-B^I(x), A^I(x)\}\geq 1$ must hold, which implies that 
either  $A^I(x)=B^I(x)=0$ or $A^I(x)=B^I(x)=1$.
It follows that, for all $x \in \Delta$, either  $(\tip(A))^I(x)=(\tip(B))^I(x)=0$ or  $(\tip(A))^I(x)=(\tip(B))^I(x)=1$.
Hence, for all $x \in \Delta$,  it holds that $(\tip(A))^I(x)=(\tip(B))^I(x)$, and from $(\tip(A) \sqsubseteq C)^I \geq 1 $, it follows that $\tip(B) \sqsubseteq C \geq 1$ is satisfied in $I$. 

For the other postulates the proof is similar and  is omitted for lack of space.
\qed
\end{proof}
The meaning of the postulate  $(REFL')$ $\tip(C) \sqsubseteq C \geq 1$ is that the typical $C$-elements must have a degree of membership in $C$ equal to $1$, which may not be the case in an interpretation $I$, when there is no domain element $x$ such that $C^I(x)=1$.
Product and Lukasiewicz logics fail to satisfy postulates $(REFL')$ and $(OR')$, but they can be proven to satisfy  
all the other postulates.

The following corollary is a consequence of Proposition \ref{prop:KLM_properties}.
\begin{corollary}
In Zadeh logic and in G\"odel logic, $\alcFt$ entailment from a given knowledge base $K$ satisfies postulates $ (LLE'), (RW'),$ $ (AND'), (OR')$ and $(CM')$.
\end{corollary}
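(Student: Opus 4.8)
The plan is to lift the single-interpretation result of Proposition~\ref{prop:KLM_properties} to the level of entailment, exploiting the fact that $K \models E$ holds exactly when $E$ is satisfied in every $\alcFt$ model $I$ of $K$. Each of the five postulates $(LLE'), (RW'), (AND'), (OR'), (CM')$ has the Horn-like shape ``if certain typicality inclusions are satisfied (possibly together with a global side condition), then another typicality inclusion is satisfied,'' and Proposition~\ref{prop:KLM_properties} guarantees this implication inside every fixed interpretation. Hence the property should propagate to the intersection over all models that entailment computes.

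Concretely, I would argue each postulate by a uniform pointwise argument over models. Take $(AND')$ for entailment, in the form: if $K \models \tip(A) \sqsubseteq C \geq 1$ and $K \models \tip(A) \sqsubseteq D \geq 1$, then $K \models \tip(A) \sqsubseteq C \sqcap D \geq 1$. Fix an arbitrary $\alcFt$ model $I$ of $K$. From the two entailment hypotheses, $I$ satisfies both $\tip(A) \sqsubseteq C \geq 1$ and $\tip(A) \sqsubseteq D \geq 1$. By Proposition~\ref{prop:KLM_properties}, $I$ itself validates the postulate $(AND')$, so $I$ satisfies $\tip(A) \sqsubseteq C \sqcap D \geq 1$. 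As $I$ was an arbitrary model of $K$, the conclusion is satisfied in all models of $K$, i.e. $K \models \tip(A) \sqsubseteq C \sqcap D \geq 1$. The cases $(LLE'), (RW'), (OR'), (CM')$ are identical: each time one reads the typicality premises as entailments from $K$, instantiates the hypotheses in a fixed model, applies Proposition~\ref{prop:KLM_properties}, and re-universalises over the models of $K$.

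For $(LLE')$ and $(RW')$ one additionally carries along the side conditions $\models A \equiv B$ and $\models C \sqsubseteq D$. These are statements of validity in fuzzy $\alc$ (truth in every fuzzy $\alc$ interpretation), hence global and independent of the particular model of $K$ chosen; they therefore pass unchanged into the per-model application of Proposition~\ref{prop:KLM_properties}, causing no additional work.

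The hard part, such as it is, is only conceptual: one must check that every postulate on the list is genuinely of the monotone closure form ``premises satisfied $\Rightarrow$ conclusion satisfied,'' so that the universal quantifier over models distributes correctly. This is also where $(REFL')$ would be problematic if it appeared here: being an unconditional assertion that already fails in a single interpretation (when no $x$ has $C^I(x)=1$), it has no per-model implication to lift. For the five postulates actually listed there is no such obstacle, and the corollary follows directly from Proposition~\ref{prop:KLM_properties} by the pointwise model argument above.
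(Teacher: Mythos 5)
Your proposal is correct and takes essentially the same approach as the paper: the paper also lifts Proposition~\ref{prop:KLM_properties} pointwise over all $\alcFt$ models of $K$ (spelling out only the $(AND')$ instance and treating the remaining postulates analogously), exactly as you do. Your extra observations---that the side conditions $\models A \equiv B$ and $\models C \sqsubseteq D$ are model-independent validities, and that $(REFL')$ has no per-model implication to lift---match the paper's surrounding discussion and add no divergence in method.
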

For instance, for $(AND')$, if $\tip(A) \sqsubseteq C \geq 1 $ and $\tip(A) \sqsubseteq D \geq 1$ are entailed from a knowledge base $K$ in $\alcFt$, then they are satisfied in all the models $I$ of $K$. Hence,  by Proposition  \ref{prop:KLM_properties}, $\tip(A) \sqsubseteq C \sqcap D \geq 1$ is as well satisfied in all the $\alcFt$ models of $K$, i.e., it is entailed by $K$.
As reflexivity is not satisfied, the notion of $\alcFt$ entailment from a given knowledge base $K$ does not define a preferential consequence relation, under the proposed formulation of the postulates.
It is easy to see that $\alcFt$ entailment  does not satisfy the Rational Monotonicity postulate.

Let us consider the alternative formulation of the postulates in the fuzzy case, obtained by interpreting the typicality inclusion $\tip(A) \sqsubseteq C $ as the fuzzy inclusion axiom $\tip(A) \sqsubseteq C > 0$, that is:

\begin{quote}
$(REFL'')$ \ $\tip(C) \sqsubseteq C >0$

$(LLE'')$ \ If 
$\models A \equiv B$  and $\tip(A) \sqsubseteq C > 0$, then $\tip(B) \sqsubseteq C  > 0$ 

$(RW'')$ \  If $\models C \sqsubseteq D$ and $\tip(A) \sqsubseteq C  > 0$, then $\tip(A) \sqsubseteq D  > 0$ 

$(AND'')$ \ If $\tip(A) \sqsubseteq C > 0 $ and $\tip(A) \sqsubseteq D > 0$, then $\tip(A) \sqsubseteq C \sqcap D > 0$

$(OR'')$ \ If $\tip(A) \sqsubseteq C > 0$ and $\tip(B) \sqsubseteq C > 0$, then $\tip(A \sqcup B) \sqsubseteq C > 0$

$(CM'')$ \  If $\tip(A) \sqsubseteq D > 0$ and $\tip(A) \sqsubseteq C > 0$, then $\tip(A \sqcap D) \sqsubseteq C > 0$ 
\end{quote}
With this formulation of the postulates, it can be proven that 
in Zadeh logic and G\"odel logic all postulates except for Cautious Monotonicity $(CM'')$ are satisfied in all $\alcFt$ interpretations. 
Under this formulation,  reflexivity $(REFL'')$ is satisfied in all  $\alcFt$ interpretations $I$,
as it requires that all typical $C$-elements in $I$ have a degree of membership in $C$ higher than $0$, which holds 
from the definition of $(\tip(C))^I$.

\begin{proposition}  \label{prop:KLM_properties2}
In Zadeh logic and in G\"odel logic, any $\alcFt$ interpretation $I$ satisfies postulates (REFL''), (LLE''), (RW''), (AND'') and (OR'').
\end{proposition}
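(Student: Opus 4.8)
The plan is to reduce each of the five $>0$-postulates to a purely set-theoretic containment between the crisp sets of typical elements, and then to check the reduced statements one by one. The key is that $(\tip(A))^I(x)$ is two-valued, so in evaluating $(\tip(A) \sqsubseteq C)^I = inf_{x \in \Delta} (\tip(A))^I(x) \rhd C^I(x)$ only two cases occur. In both Zadeh and G\"odel logic one has $0 \rhd v = 1$ and $1 \rhd v = v$ for every $v \in [0,1]$; hence the summands with $(\tip(A))^I(x)=0$ contribute $1$, while those with $(\tip(A))^I(x)=1$ contribute $C^I(x)$. Since $\Delta$ is finite, this yields $(\tip(A) \sqsubseteq C)^I = min\{\, C^I(x) : x \in min_{<_A}(A^I_{>0}) \,\}$, with value $1$ when there are no typical $A$-elements. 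Writing $M_A = min_{<_A}(A^I_{>0})$ for the set of typical $A$-elements, I obtain the \emph{characterization lemma}: the fuzzy inclusion $\tip(A) \sqsubseteq C > 0$ is satisfied in $I$ if and only if $M_A \subseteq C^I_{>0}$. This single lemma converts all the postulates into inclusions between crisp sets.

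With it, (REFL'') is immediate, since $M_C = min_{<_C}(C^I_{>0}) \subseteq C^I_{>0}$ by definition. For (LLE'') I would reuse the computation already carried out for $(LLE')$ in Proposition \ref{prop:KLM_properties}: validity of $A \sqsubseteq B \geq 1$ and $B \sqsubseteq A \geq 1$ forces $A^I(x)=B^I(x)$ for all $x$ in both logics, so $<_A$ coincides with $<_B$ and $A^I_{>0}=B^I_{>0}$, whence $M_A = M_B$ and the containment in $C^I_{>0}$ transfers. For (RW'') I would first derive $C^I(x) \leq D^I(x)$ for all $x$ from $\models C \sqsubseteq D$: in G\"odel logic $C^I(x) \rhd D^I(x) \geq 1$ means exactly $C^I(x) \leq D^I(x)$, and in Zadeh logic $max\{1-C^I(x), D^I(x)\} \geq 1$ forces $C^I(x)=0$ or $D^I(x)=1$, again giving $C^I(x) \leq D^I(x)$. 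Hence $C^I_{>0} \subseteq D^I_{>0}$, and $M_A \subseteq C^I_{>0}$ propagates to $M_A \subseteq D^I_{>0}$. For (AND''), since $(C \sqcap D)^I = min\{C^I, D^I\}$ in both logics, one has $(C \sqcap D)^I_{>0} = C^I_{>0} \cap D^I_{>0}$, so the two hypotheses $M_A \subseteq C^I_{>0}$ and $M_A \subseteq D^I_{>0}$ combine to $M_A \subseteq (C \sqcap D)^I_{>0}$.

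The main obstacle is (OR''), the only case that genuinely concerns how the preference relation of a compound concept relates to those of its parts. Here I would establish the inclusion $M_{A \sqcup B} \subseteq M_A \cup M_B$. Since $(A \sqcup B)^I = max\{A^I, B^I\}$, a typical $(A \sqcup B)$-element is one maximizing $max\{A^I, B^I\}$ over $\Delta$; let $m$ be this maximal value, which is positive whenever $M_{A \sqcup B} \neq \emptyset$. If $x \in M_{A \sqcup B}$ then $max\{A^I(x), B^I(x)\} = m$, so either $A^I(x) = m$ or $B^I(x) = m$; and because $m \geq A^I(y)$ and $m \geq B^I(y)$ for every $y \in \Delta$, in the first case $A^I(x)$ is maximal over $\Delta$, hence $x \in M_A$, and symmetrically $x \in M_B$ in the second. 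Thus $M_{A \sqcup B} \subseteq M_A \cup M_B$, and together with $M_A \subseteq C^I_{>0}$ and $M_B \subseteq C^I_{>0}$ this gives $M_{A \sqcup B} \subseteq C^I_{>0}$, i.e. $\tip(A \sqcup B) \sqsubseteq C > 0$; the degenerate case $M_{A \sqcup B} = \emptyset$ (that is, $A^I_{>0} = B^I_{>0} = \emptyset$) is absorbed by the vacuous satisfaction built into the characterization lemma. The delicate point is precisely that $<_{A \sqcup B}$-minimality corresponds to maximality of the membership value, and that the global maximum of the s-norm $max$ pins each witness to a global maximum of one of the disjuncts; this is what makes the reduction succeed and relies on the common $max$ s-norm of Zadeh and G\"odel logic.
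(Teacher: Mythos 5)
Your proof is correct, but it takes a genuinely different route from the paper's. The paper actually omits the proof of Proposition~\ref{prop:KLM_properties2}: the only argument shown is the sample proof of $(LLE')$ for Proposition~\ref{prop:KLM_properties}, which manipulates implication values directly and separately in each logic ($max\{1-a,b\}$ in Zadeh, the residuum in G\"odel), with the other postulates declared similar; for $(REFL'')$ the paper merely remarks that it holds by definition of $(\tip(C))^I$. You instead factor everything through a single characterization lemma: since $(\tip(A))^I$ is two-valued and both logics satisfy $0 \rhd v = 1$ and $1 \rhd v = v$, the axiom $\tip(A) \sqsubseteq C > 0$ holds in $I$ iff $min_{<_A}(A^I_{>0}) \subseteq C^I_{>0}$ (with vacuous satisfaction when the set of typical $A$-elements is empty, legitimate since the paper assumes $\Delta$ finite). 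This reduces all five postulates to crisp set inclusions and isolates exactly where the choice of logic enters: the min t-norm for $(AND'')$ and the max s-norm for $(OR'')$, where your inclusion $M_{A \sqcup B} \subseteq M_A \cup M_B$ --- resting on the observation that $<_E$-minimality means attaining the global maximum of $E^I$ --- is the one structurally nontrivial step, and I verified it is sound. What the paper's per-logic style buys is uniformity with the $\geq 1$ postulates of Proposition~\ref{prop:KLM_properties}, where actual membership degrees (not just positivity) matter and no crisp reduction is available; what your lemma buys is a shorter, modular verification of the $>0$ postulates that also explains the scope of the result: the lemma itself would still hold, e.g., in Lukasiewicz or product logic (where $0 \rhd v = 1$ and $1 \rhd v = v$ also hold), while the $(AND'')$ and $(OR'')$ steps visibly depend on the min/max combination functions shared by Zadeh and G\"odel logic, consistent with the paper's remark that other logics lose some postulates.
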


\noindent
Cautious Monotonicity is too strong in the formulation $(CM'')$. From the hypothesis that $\tip(A) \sqsubseteq D > 0$ is entailed from $K$, we know that, in all $\alcFt$ models $I$ of $K$, the typical $A$-elements have some degree of membership $n$ in $D$. However, the degree $n$ may be small and not enough to conclude that typical $A \sqcap D$-elements are as well typical $A$-elements (which is needed to conclude 
that $\tip(A \sqcap D) \sqsubseteq C > 0$ is satisfied in $I$, given that  $\tip(A) \sqsubseteq C > 0$ is satisfied in $I$).
A weaker alternative formulation of Cautious Monotonicity 
can be obtained by strengthening the antecedent of $(CM'')$ as follows:
\begin{quote} 
$(CM^*)$ \  If $\tip(A) \sqsubseteq D \geq 1$ and $\tip(A) \sqsubseteq C >0$, then $\tip(A \sqcap D) \sqsubseteq C >0$ 
\end{quote}
This postulate is satisfied by  $\alcFt$ entailment in Zadeh logic and G\"odel logic. We can then prove that:

\begin{corollary}
In Zadeh logic and in G\"odel logic, $\alcFt$ entailment from a knowledge base $K$ satisfies postulates (REFL''), (LLE''), (RW''), (AND''), (OR'') and (CM$^*$).
\end{corollary}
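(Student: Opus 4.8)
The plan is to derive the Corollary from Proposition~\ref{prop:KLM_properties2} by the standard lifting argument for entailment, treating $(CM^*)$ as the only genuinely new case. For the postulates $(REFL''),(LLE''),(RW''),(AND'')$ and $(OR'')$, Proposition~\ref{prop:KLM_properties2} already guarantees that each of them holds in \emph{every} $\alcFt$ interpretation. Hence, whenever the premises of such a postulate are entailed by $K$, they are satisfied in every model $I$ of $K$; applying the interpretation-level property in each such $I$ yields the conclusion in all models of $K$, so the conclusion is entailed by $K$. This is exactly the argument spelled out after the first Corollary; for $(REFL'')$, which has no premises, it degenerates to the observation that $\tip(C)\sqsubseteq C>0$ holds in every model of $K$.

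The substantive step is $(CM^*)$, which is not covered by Proposition~\ref{prop:KLM_properties2}, so I would first establish its interpretation-level version: in any Zadeh or G\"odel $\alcFt$ interpretation $I$, if $\tip(A)\sqsubseteq D\geq 1$ and $\tip(A)\sqsubseteq C>0$ are satisfied, then so is $\tip(A\sqcap D)\sqsubseteq C>0$. The key observation is that the strengthened first premise forces the typical $A$-elements to sit at membership degree $1$ in $D$: since $(\tip(A))^I(x)=1$ for every typical $A$-element $x$, satisfaction of $\tip(A)\sqsubseteq D\geq 1$ gives $1\rhd D^I(x)\geq 1$, hence $D^I(x)=1$ in both logics.

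With this in hand, I would compare the preferences $<_A$ and $<_{A\sqcap D}$ at the top of their orders. Let $m=\max_{x\in\Delta}A^I(x)$; since $\Delta$ is finite and, assuming $A^I_{>0}\neq\emptyset$ (otherwise the conclusion is vacuous, as $(A\sqcap D)^I\leq A^I$ forces $(A\sqcap D)^I_{>0}=\emptyset$), the typical $A$-elements are exactly those with $A^I(x)=m>0$. For such an $x$ we have $D^I(x)=1$, so $(A\sqcap D)^I(x)=A^I(x)\otimes D^I(x)=\min\{m,1\}=m$, whereas for any $y$ we have $(A\sqcap D)^I(y)=\min\{A^I(y),D^I(y)\}\leq A^I(y)\leq m$, with equality only when $A^I(y)=m$, i.e. only when $y$ is a typical $A$-element. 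Thus $m$ is the maximal value of $(A\sqcap D)^I$ and it is attained precisely on the typical $A$-elements, so the $<_{A\sqcap D}$-minimal elements of $(A\sqcap D)^I_{>0}$ coincide with the typical $A$-elements. Consequently every typical $A\sqcap D$-element is a typical $A$-element, and the premise $\tip(A)\sqsubseteq C>0$ then yields $C^I(x)>0$ for each of them, i.e. $\tip(A\sqcap D)\sqsubseteq C>0$ is satisfied in $I$. Lifting this interpretation-level fact to $K$ exactly as above completes $(CM^*)$.

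The main obstacle is precisely this identification of the typical $A\sqcap D$-elements with the typical $A$-elements; it is where the $\geq 1$ degree in the antecedent of $(CM^*)$ is indispensable, and it is exactly the step that fails for $(CM'')$, where a typical $A$-element may have $D^I(x)=n<m$ and be overtaken in $A\sqcap D$ by some non-typical $A$-element. Once this is settled, everything else reduces to the routine entailment-lifting argument.
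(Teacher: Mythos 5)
Your proof is correct and takes essentially the approach the paper intends: the entailment-lifting template made explicit after the first corollary handles $(REFL'')$--$(OR'')$ via Proposition~\ref{prop:KLM_properties2}, and your interpretation-level argument for $(CM^*)$ --- that the $\geq 1$ premise forces $D^I(x)=1$ on every typical $A$-element, so with $\otimes=\min$ the $<_{A\sqcap D}$-minimal elements of $(A\sqcap D)^I_{>0}$ coincide with the typical $A$-elements --- is precisely the mechanism the paper itself points to when explaining why the unstrengthened $(CM'')$ fails. The paper asserts $(CM^*)$ without proof, and your write-up correctly supplies the omitted details, including the vacuous case $A^I_{>0}=\emptyset$ and the reliance on finiteness of $\Delta$.
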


As in the two-valued case,  the typicality operator $\tip$ introduced in $\alcFt$   
is non-monotonic in the following sense: for a given knowledge base $K$, from $K \models_{\alcFt} C \sqsubseteq D \geq 1$ we cannot conclude that
$K \models_{\alcFt} \tip(C) \sqsubseteq \tip(D) \geq 1$.
Nevertheless, the logic $\alcFt$ is monotonic, 
that is,  for two $\alcFt$ knowledge bases $K$ and $K'$, if $K \subseteq K'$, and $K \models_{\alcFt} E$ then 
$K' \models_{\alcFt} E$. 
 $\alcFt$ is a fuzzy relative of the monotonic logic $\alct$ \cite{FI09}.

Although most of the postulates of a preferential consequence relation hold in $\alcFt$,
this typicality extension of fuzzy $\alc$ is rather weak, as it happens in the two-valued case for
$\alct$, for the rational extension of $\alc$ in \cite{sudafricaniKR} and for preferential and rational entailment  in KLM approach \cite{whatdoes}. In particular, $\alcFt$ does not allow to deal with {\em irrelevance}. From the fact that birds normally fly, one would like to be able to conclude that normally yellow birds fly (being the color irrelevant to flying).  

As in KLM framework, in the two-valued case, this has led to the definition of 
non-monotonic defeasible DLs \cite{casinistraccia2010,CasiniDL2013,AIJ15,BonattiSauro17,Casinistraccia2012,AIJ21},  which exploit some closure construction (such as the rational closure   \cite{whatdoes} and the lexicographic closure \cite{Lehmann95}) or some notion of minimal entailment \cite{bonattilutz}. 
In the next section we strengthen $\alcFt$ based on a closure construction similar to the one in \cite{JELIA2021}, but exploiting a weaker notion of coherence, and we discuss its properties.

\section{Strengthening $\alcFt$: a closure construction} \label{sec:closure}

To overcome the weakness of rational closure (as well as of preferential entailment), Lehmann has introduced the lexicographic closure of a conditional knowledge base \cite{Lehmann95} which strengthens the rational closure by allowing further inferences.
From the semantic point of view, in the propositional case, a preference relation is defined on the set of propositional interpretations, so that the interpretations satisfying conditionals with higher rank are preferred to the interpretations satisfying conditionals with lower rank and, 
in case of contradictory defaults with the same rank,  interpretations satisfying more defaults with that rank are preferred.
The ranks of conditionals used by the  lexicographic closure construction are those computed by the rational closure construction \cite{whatdoes} and capture  specificity:  the higher is the rank, the more specific is the default.
In other cases, the ranks may be part of the knowledge base specification, such as for ranked knowledge bases in Brewka's framework of basic preference descriptions  \cite{Brewka04}, or might be learned from empirical data. 

In this section, we consider weighted (fuzzy) knowledge bases, where typicality inclusions are associated to weights, 
and develop a (semantic) closure construction to strengthen $\alcFt$ entailment,
which leads to a generalization of the notion of fuzzy coherent multipreference model in \cite{JELIA2021}. 
The construction is related to the definition of Kern-Isberner's c-representations  \cite{Kern-Isberner01,Kern-Isberner2014} which also include penalty points  
for falsified conditionals.

A  {\em weighted $\alcFt$ knowledge base} $K$, over a set ${\cal C}= \{C_1, \ldots, C_k\}$ of distinguished $\alc$ concepts,
is a tuple $\langle  {\cal T}_{f}, {\cal T}_{C_1}, \ldots, {\cal T}_{C_k}, {\cal A}_f  \rangle$, where  ${\cal T}_{f}$  is a set of fuzzy $\alcFt$ inclusion axiom, ${\cal A}_f$ is a set of fuzzy $\alcFt$ assertions  
and
${\cal T}_{C_i}=\{(d^i_h,w^i_h)\}$ is a set of all weighted typicality inclusions $d^i_h= \tip(C_i) \sqsubseteq D_{i,h}$ for $C_i$, indexed by $h$, where each inclusion $d^i_h$ has weight $w^i_h$, a real number.
As in \cite{JELIA2021}, the typicality operator is assumed to occur only on the left hand side of a weighted typicality inclusion, and we call {\em distinguished concepts}  those concepts $C_i$ occurring on the l.h.s. of some typicality inclusion $\tip(C_i) \sqsubseteq D$.
Arbitrary $\alcFt$ inclusions and assertions may belong to ${\cal T}_{f}$ and ${\cal A}_{f}$.

\begin{example} \label{exa:Penguin}
Consider the weighted knowledge base $K =\langle {\cal T}_{f},  {\cal T}_{Bird}, {\cal T}_{Penguin},$ $ {\cal T}_{Canary},$ $ {\cal A}_f \rangle$, over the set of distinguished concepts ${\cal C}=\{\mathit{Bird, Penguin, Canary}\}$, with empty ABox 
and with $ {\cal T}_{f}$ containing, for instance, the inclusions:

\noindent
\ \ $\mathit{Yellow \sqcap Black  \sqsubseteq  \bot} \geq 1$\footnote{This is a strong requirement which, e.g., in G\"odel logic, holds in $I$ only if $\mathit{(Yellow \sqcap Black)^I(x)=0 }$, $\forall x\in \Delta$. This suggests to be cautious when combining fuzzy inclusions and defeasible inclusions in the same KB (e.g.,  $\mathit{Penguin  \sqsubseteq  Bird \geq 1}$ would be too strong and conflicting with our interpretation of degree of membership as degree of typicality).}
 \ \ \ \ \ \ \ \ \ \ \   $\mathit{Yellow \sqcap Red  \sqsubseteq  \bot \geq 1}$   \ \ \ \ \ \ \ \ \ \  $\mathit{Black \sqcap Red  \sqsubseteq  \bot \geq 1}$

\noindent
The weighted TBox ${\cal T}_{Bird} $ 
contains the following weighted defeasible inclusions: 

$(d_1)$ $\mathit{\tip(Bird) \sqsubseteq Fly}$, \ \  +20  \ \ \ \ \ \ \ \ \  \ \ \ \ \   $(d_2)$ $\mathit{\tip(Bird) \sqsubseteq \exists has\_Wings. \top}$, \ \ +50

$(d_3)$ $\mathit{\tip(Bird) \sqsubseteq  \exists has\_Feather.\top}$, \ \ +50;

\noindent
${\cal T}_{Penguin}$ and  ${\cal T}_{Canary}$ contain, respectively, the following defeasible inclusions:

$(d_4)$ $\mathit{\tip(Penguin) \sqsubseteq Bird}$, \ \ +100 \ \ \ \ \ \ \ \ \ \ \ $(d_7)$ $\mathit{\tip(Canary) \sqsubseteq Bird}$, \ \ +100

$(d_5)$ $\mathit{\tip(Penguin) \sqsubseteq  Fly}$, \ \ - 70   \ \ \ \ \ \ \ \ \ \ \ \ \ \ \  $(d_8)$ $\mathit{\tip(Canary) \sqsubseteq Yellow}$, \ \  +30

$(d_6)$ $\mathit{\tip(Penguin) \sqsubseteq Black}$, \ \  +50; \ \ \ \ \ \ \ \ \ \ $(d_9)$ $\mathit{\tip(Canary) \sqsubseteq Red}$, \ \  +20

\noindent
 The meaning is that a bird normally has wings, has feathers and flies, but having wings and feather (both with weight 50)  for a bird is more plausible than flying (weight 20), although flying is regarded as being plausible. For a penguin, flying is not plausible (inclusion $(d_5)$ has negative weight -70), while being a bird and being black are plausible properties of prototypical penguins, and $(d_4)$ and $(d_6)$ have positive weights (100 and 50, respectively). Similar considerations can be done for concept $\mathit{Canary}$. Given Reddy who is red, has wings, has feather and flies (all with degree 1) and Opus who has wings and feather (with degree 1), is black with degree 0.8 and does not fly ($\mathit{Fly^I(opus) = 0}$), considering the weights of defeasible inclusions, we may expect Reddy to be more typical than Opus as a bird, but less typical than Opus as a penguin. 
 
 \end{example}

We define the semantics of a weighted knowledge base trough a {\em semantic closure construction}, similar in spirit to Lehmann's lexicographic closure \cite{Lehmann95}, but more related to c-representations and, additionally, based on multiple preferences.
The construction allows a subset of the $\alcFt$ interpretations to be selected, 
the interpretations whose induced preference relations $<_{C_i}$, for the distinguished concepts $C_i$,  faithfully represent the defeasible part of the knowledge base $K$.

Let ${\cal T}_{C_i}=\{(d^i_h,w^i_h)\}$ be the set of weighted typicality inclusions $d^i_h= \tip(C_i) \sqsubseteq D_{i,h}$ associated to the distinguished concept $C_i$, and let $I=\langle \Delta, \cdot^I \rangle$ be a fuzzy $\alcFt$ interpretation.
In the two-valued case, we would associate to each domain element $x \in \Delta$ and each distinguished concept $C_i$, a weight $W_i(x)$ of $x$ wrt $C_i$ in $I$, by summing the weights of the defeasible inclusions satisfied by $x$.
However, as $I$ is a fuzzy interpretation, we do not only  distinguish between the typicality inclusions satisfied or  
falsified  by $x$;
 we also need to consider, for all inclusions $\tip(C_i) \sqsubseteq D_{i,h} \in {\cal T}_{C_i}$,  
the degree of membership of $x$ in $D_{i,h}$. 
Furthermore, in comparing the weight of domain elements with respect to $<_{C_i}$, we give higher preference to the domain elements belonging to $C_i$ (with a degree
greater than $0$), with respect to those 
not belonging to $C_i$ (having membership degree $0$). 

For each domain element $x \in \Delta$ and distinguished concept $C_i$, {\em the weight $W_i(x)$ of $x$ wrt $C_i$} in the $\alcFt$ interpretation $I=\langle \Delta, \cdot^I \rangle$ is defined as follows:
 \begin{align}\label{weight_fuzzy}
	W_i(x)  & = \left\{\begin{array}{ll}
						 \sum_{h} w_h^i  \; D_{i,h}^I(x) & \mbox{ \ \ \ \  if } C_i^I(x)>0 \\
						- \infty &  \mbox{ \ \ \ \  otherwise }  
					\end{array}\right.
\end{align} 
where $-\infty$ is added at the bottom of all real values.

The value of $W_i(x) $ is $- \infty $ when $x$ is not a $C$-element (i.e., $C_i^I(x)=0$). 
Otherwise, $C_i^I(x) >0$ and the higher is the sum $W_i(x) $, the more typical is the element $x$ relative to concept $C_i$.
How much $x$ satisfies a typicality property  $\tip(C_i) \sqsubseteq D_{i,h}$ depends on the value of $D_{i,h}^I(x) \in [0,1]$, which is weighted by $ w_h^i $ in the sum. 
In the two-valued case, $D_{i,h}^I(x) \in \{0,1\}$, and 
$W_i(x)$ is the sum of the weights of the typicality inclusions for $C$ satisfied by $x$, if $x$ is a $C$-element,  and is $-\infty $, otherwise.

\begin{example} \label{exa:penguin2}
Let us consider again Example \ref{exa:Penguin}.
Let $I$ be an $\alcFt$ interpretation such that $\mathit{Fly^I(reddy)  = (\exists has\_Wings. \top)^I (reddy)= (\exists has\_Feather. \top)^I (reddy)=1}$ and  
\newline $\mathit{Red^I(reddy) =1 }$,
i.e., Reddy   flies, has wings and feather and is red (and $\mathit{Black^I(reddy)}$ $=0$). Suppose further that $\mathit{Fly^I(opus) = 0}$ and $\mathit{ (\exists has\_Wings. \top)^I (opus)= (\exists has\_}$ $ \mathit{Feather. \top)^I (opus)=1 }$ and $\mathit{ Black^I(opus) =0.8}$, i.e., Opus does not fly, has wings and feather, and is black with degree 0.8. Considering the weights of typicality inclusions for $\mathit{Bird}$,  $\mathit{W_{Bird}(reddy)= 20+}$ $\mathit{50+50=120}$ and $\mathit{W_{Bird}(opus)= 0+50+}$ $50=100$.
This suggests that reddy should be more typical as a bird than opus.
On the other hand, if we suppose $\mathit{Bird^I(reddy)=1}$ and $\mathit{Bird^I(opus)}$ $=0.8$, then $\mathit{W_{Penguin}}$ $\mathit{(reddy)}$ $ \mathit{= 100-70+0=30}$ and $\mathit{W_{Penguin}(opus)= 0.8 \times 100-} $ $\mathit{ 0+0.8 \times 50}$ $\mathit{=120}$. 
This suggests that reddy should be less typical as a penguin than opus.
 \end{example}
We have seen in Section \ref{sec:fuzzyalc+T} that each fuzzy interpretation $I$ induces a preference relation for each concept and, in particular, it induces a preference   $<_{C_i}$ for each distinguished concept $C_i$. 
We further require that, if $x  <_{C_i}  y$, 
then $x$ must be more typical than $y$ wrt $C_i$, that is, 
 the weight $W_i(x)$ of $x$ wrt $C_i$ should be higher than the weight $W_i(y)$ of $y$ wrt $C_i$ (and $x$ should satisfy more properties or more plausible properties of typical $C_i$-elements with respect to $y$). 
This leads to the following definition of a fuzzy multipreference model of a weighted a $\alcFt$  knowledge base.

\begin{definition}[Faithful (fuzzy) multipreference model of $K$]\label{fuzzy_fm-model} 
Let $K=\langle  {\cal T}_{f},$ $ {\cal T}_{C_1}, \ldots,$ $ {\cal T}_{C_k}, {\cal A}_f  \rangle$ be  a weighted $\alcFt$ knowledge base  over  ${\cal C }$. 
A {\em  faithful (fuzzy) multipreference model} (fm-model)  of $K$ is  a fuzzy $\alcFt$ interpretation $I=\langle \Delta, \cdot^I \rangle$  
s.t.: 
\begin{itemize}
\item
$I$  satisfies  the fuzzy inclusions in $ {\cal T}_{f}$ and the fuzzy assertions in ${\cal A}_f$;
\item 
for all $C_i\in {\cal C}$,  the preference {\em $<_{C_i}$   is faithful to $ {\cal T}_{C_i}$}, that is: 
\begin{align}\label{pref_rel_fuzzy}
x  <_{C_i}  y & \Ri W_i(x) > W_i(y)  
\end{align}
\end{itemize}
\end{definition}
Let us consider again Example \ref{exa:penguin2}.

\begin{example}
Referring to Example  \ref{exa:penguin2} above, 
where $\mathit{Bird^I(reddy)=1}$, $\mathit{Bird^I(opus)}$ $=0.8$,  let us further assume that $\mathit{Penguin^I(reddy)=0.2}$ and $\mathit{Penguin^I(opus)=0.8}$.
Clearly, $reddy <_{Bird} opus$ and  $opus <_{Penguin} reddy$. For the interpretation $I$ to be faithful, it is necessary that the conditions $\mathit{W_{Bird}(reddy) > W_{Bird}(opus)}$ and $\mathit{W_{Penguin}}$ $\mathit{(opus) >}$ $\mathit{ W_{Penguin}(reddy)}$ hold; which is true, as seen in Example  \ref{exa:penguin2}.
On the contrary, if it were $\mathit{Penguin^I(reddy)=0.9}$, the interpretation $I$ would not be faithful. 
\end{example}

Notice that, requiring that the converse of condition (\ref{pref_rel_fuzzy}) also holds, gives the equivalence
$x  <_{C_i}  y$ {\em iff} $W_i(x) > W_i(y) $,
a stronger condition which would make the notion of faithful multipreference model of $K$ above coincide with the notion of {\em coherent fuzzy multipreference model} of $K$ introduced in \cite{JELIA2021}. 
Here, we have considered the weaker notion of faithfulness and a larger class of fuzzy multipreference models of a weighted knowledge base, compared to the class of coherent models. 
This allows a larger class of monotone non-decreasing activation functions in neural network models to be captured 
(for space limitation, for this result, we refer to  \cite{arXiv_JELIA2020}, Sec. 7).

The notion of {\em faithful multipreference entailment (fm-entailment)} from a weighted $\alcFt$ knowledge base $K$ can be defined in the obvious way.
\begin{definition}[fm-entailment] \label{fm-entailment}
A fuzzy axiom $E$   {is fm-entailed from a fuzzy weighted knowledge base $K$} ($K \models_{fm} E$)  if, for all fm-models $I=\langle \Delta, \cdot^I \rangle$ of $K$, $I$ satisfies $E$.
\end{definition}
From Proposition \ref{prop:KLM_properties2}, the next corollary follows as a simple consequence.

\begin{corollary}
In Zadeh logic and in G\"odel logic, fm-entailment from a given knowledge base $K$ satisfies postulates (REFL''), (LLE''), (RW''), (AND''), (OR'') and (CM$^*$).
\end{corollary}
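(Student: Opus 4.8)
The plan is to exploit the fact that, by Definition~\ref{fm-entailment}, fm-entailment is simply truth in all models of a distinguished subclass of $\alcFt$ interpretations --- the faithful multipreference models of $K$ --- and that every fm-model is in particular an $\alcFt$ interpretation. Each postulate is a closure property of the shape ``if certain typicality inclusions (possibly under a model-independent side condition) are satisfied, then another typicality inclusion is satisfied'', which Proposition~\ref{prop:KLM_properties2} already establishes at the level of an arbitrary $\alcFt$ interpretation for five of the six cases. So the whole corollary should reduce to a uniform \emph{model-wise lifting} from the interpretation level to the entailment level.

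Concretely, I would argue as follows. For a postulate with premises of the form $\tip(\cdot)\sqsubseteq\cdot>0$ (together, in (LLE'') and (RW''), with a side condition $\models A\equiv B$ or $\models C\sqsubseteq D$, which is a statement about validity in fuzzy $\alc$ and hence does not depend on any particular interpretation), suppose the premises are fm-entailed from $K$. Then, by Definition~\ref{fm-entailment}, they hold in every fm-model $I$ of $K$. Since each such $I$ is an $\alcFt$ interpretation, Proposition~\ref{prop:KLM_properties2} applies to $I$ and yields the conclusion in $I$. As $I$ ranges over all fm-models of $K$, the conclusion holds in all of them, i.e.\ it is fm-entailed. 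For (REFL'') there are no premises: $\tip(C)\sqsubseteq C>0$ holds in every $\alcFt$ interpretation, a fortiori in every fm-model of $K$, hence it is fm-entailed. This settles (REFL''), (LLE''), (RW''), (AND'') and (OR'').

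The only case not handled by Proposition~\ref{prop:KLM_properties2} is (CM$^*$), so here I would first record its interpretation-level version and then lift it by the same scheme. In both Zadeh and G\"odel logic $(A\sqcap D)^I(x)=\min\{A^I(x),D^I(x)\}$, so if $\tip(A)\sqsubseteq D\geq 1$ holds in $I$ --- every typical $A$-element has $D$-degree $1$ --- then on the typical $A$-elements the value of $A\sqcap D$ equals the maximal value $m$ of $A^I$, while everywhere else $(A\sqcap D)^I\leq A^I\leq m$. Hence the maximum of $(A\sqcap D)^I$ is $m$ and is attained exactly at the typical $A$-elements, so every typical $A\sqcap D$-element is a typical $A$-element; the hypothesis $\tip(A)\sqsubseteq C>0$ then forces $\tip(A\sqcap D)\sqsubseteq C>0$ in $I$, the degenerate case $m=0$ being vacuous because $0\rhd b=1$ in both logics makes all the relevant inclusions hold trivially. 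Lifting this interpretation-level property to the fm-models of $K$ exactly as above gives (CM$^*$) for fm-entailment.

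I expect the proof to be essentially routine; the only point requiring genuine care is precisely that Proposition~\ref{prop:KLM_properties2} does not list (CM$^*$), so one cannot simply invoke it and must supply the interpretation-level (CM$^*$) by hand. The crux of that step is the claim that the typical $A\sqcap D$-elements coincide with the typical $A$-elements once all typical $A$-elements have $D$-degree $1$, and remembering to treat the vacuous case in which $A^I$ is identically $0$.
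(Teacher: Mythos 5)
Your proposal is correct and follows essentially the same route as the paper: fm-entailment is truth in all fm-models, every fm-model is an $\alcFt$ interpretation, so Proposition~\ref{prop:KLM_properties2} lifts model-wise to entailment, which is exactly how the paper derives both corollaries. The one piece the paper asserts without proof --- the interpretation-level (CM$^*$) --- you supply correctly; just make explicit that off the typical $A$-elements one has $A^I(x)<m$ \emph{strictly} (the typical elements are precisely those attaining the maximal degree $m$), since this strict inequality, rather than your stated $(A\sqcap D)^I\leq A^I\leq m$, is what justifies the claim that the maximum of $(A\sqcap D)^I$ is attained \emph{exactly} at the typical $A$-elements.
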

To conclude the paper let us 
infomally describe how fuzzy multipreference entailment deals with irrelevance and avoids inheritance blocking, properties which have been considered  as desiderata for preferential logics of defeasible reasoning \cite{Weydert03,Kern-Isberner2014}. 

For ``irrelevance", we have already considered an example: if typical birds fly, we would like to conclude that  of  typical yellow birds fly, as the property of being yellow is irrelevant with respect to flying. Observe, that in Example  \ref{exa:penguin2}, we can conclude that Reddy is more typical than Opus  as a bird ($\mathit{reddy <_{Bird} Opus}$), as Opus does not fly, while Reddy flies. The relative typicality of Reddy and Opus wrt $\mathit{Bird}$ does not depend on their color, and we would obtain the same relative preferences if reddy were yellow rather than red. A formal proof of the irrelevance property would require the domain $\Delta$ to be large enough to contain some typical bird which is yellow, requiring, as usual in the two-valued case \cite{AIJ15}, a restriction to some canonical models.

The  fuzzy multipreference entailment is not subject to the problem called by Pearl  the ``blockage of property inheritance" problem  \cite{Pearl90},
and by Benferhat et al. the ``drowning problem"  \cite{BenferhatIJCAI93}. This problem affects rational closure and system Z \cite{Pearl90}, as well as  rational closure refinements. Roughly speaking, the problem is that property inheritance from classes to subclasses is not guaranteed.
If a subclass is exceptional with respect to a superclass for a given property, 
it does not inherit from that superclass any other property. 
For instance, referring to the typicality inclusions in Example \ref{exa:penguin2}, in the rational closure, typical penguins would not inherit the property of typical birds of having wings, being exceptional to birds concerning flying.
On the contrary,
in  fuzzy multipreference models, 
considering again Example \ref{exa:penguin2}, the degree of membership of a domain element $x$ in concept $\mathit{Bird}$, i.e., $\mathit{Bird^I(x)}$, is used to determine the weight of $x$ wrt $\mathit{Penguin}$ (as the weight of typicality inclusion $(d_4)$ is positive. The higher is the value of $\mathit{Bird^I(x)}$, the higher the value of  $\mathit{W_{Penguin}(x)}$.  Hence, provided the relevant properties of penguins (such as non-flying) remain unaltered, the more typical is $x$ as a bird, 
the more typical is $x$ as a Penguin.
Notice also that the weight $\mathit{W_{Bird}(x)}$ of a domain element $x$ wrt $\mathit{Bird}$ is related to the interpretation of $Bird$ in $I$ by the faithfulness condition.

\section{Conclusions}

In this paper we have studied the properties of an extension of fuzzy $\alc$ with typicality, $\alcFt$. We have considered some alternative reformulation of the KLM postulates of a preferential consequence relation for $\alcFt$, showing that most of these postulates are satisfied, depending on the formulation considered and on the fuzzy logic combination functions.
We have considered a (semantic) closure construction to strengthen $\alcFt$,  by defining a notion of faithful (fuzzy) multipreference model of a weighted knowledge base. Faithful models of a conditional (weighted) knowledge base are a more general class 
of models with respect to the coherent fuzzy multipreference models considered in \cite{JELIA2021} to provide a semantic interpretation of multilayer perceptrons. 
This allows us to capture the larger class of monotone non-decreasing activation functions in multilayer perceptrons (a result which is not included in the paper due to space limitations and for which we refer to \cite{arXiv_JELIA2020}). The paper studies the KLM properties of faithful multipreference entailment 
and discusses how the multipreference approach allows to deal with irrelevance and avoids inheritance blocking.

For MLPs, the proposed semantics allows the input-output behavior of a deep network (considered after training) to be captured by a fuzzy multipreference interpretation built over a set of input stimuli, through a simple construction which exploits the activity level of neurons for the stimuli. 
Each unit $h$ of $\enne$ can be associated to a concept name $C_h$ and,
for a given domain $\Delta$ of input stimuli, the activation value of unit $h$ for a stimulus $x$ is interpreted as the degree of membership of $x$ in concept $C_h$. 
The resulting fm-interpretation can be used for verifying properties of the network by model checking and it can be proven \cite{JELIA2021} to be a model of the conditional knowledge base $K^{\enne}$ obtained, from the network $\enne$,  by mapping synaptic connections to weighted conditionals.
This opens to the possibility 
of combining empirical knowledge and symbolic knowledge in the form of DL axioms  and motivates the study of the properties of this  
multipreference extension of fuzzy DLs.

Undecidability results for fuzzy description logics with general inclusion axioms 
\cite{BaaderPenaloza11,CeramiStraccia2011,BorgwardtPenaloza12} 
have motivated restricting the logics to finitely valued semantics \cite{BorgwardtPenaloza13}, 
and also motivate the investigation of decidable approximations of fm-entailment.   
An issue is whether alternative (non-crisp) definitions of the typicality operator could be adopted, inspired to fuzzy set based models of linguistic hedges \cite{HuynhHN02}. 
Another issue is whether the multipreference semantics can provide a semantic interpretation to other neural network models, besides
MLPs and 
Self-Organising Maps  \cite{kohonen2001}, for which a (two-valued) multipreference semantics and a fuzzy semantics have been 
investigated in  \cite{CILC2020}.


\end{document}